\titlespacing*{\section}{0pt}{12pt}{6pt}
\titlespacing*{\subsection}{0pt}{10pt}{4pt}
\titlespacing*{\section}{0pt}{12pt}{6pt}
\titlespacing*{\subsection}{0pt}{10pt}{4pt}
\newtheorem{theorem}{Theorem}[section]
\newtheorem{proposition}[theorem]{Proposition}
\newtheorem{lemma}[theorem]{Lemma}
\newtheorem{corollary}[theorem]{Corollary}
\theoremstyle{definition}
\title{\textbf{Beyond Noise: A Hypothesis Testing Approach to Robust Feature Selection}}
\author{
\large Mousam Sinha$^{1}$,\, 
Tirtha Sarathi Ghosh$^{2}$,\,
Koushik Biswas$^{3}$,\
Ridam Pal$^{3}$,\,
}
\affil{
\small
$^{1}$Indian Statistical Institute (ISI), Kolkata, India \\
$^{2}$Jadavpur University, Kolkata, India \\
$^{3}$Indraprastha Institute of Information Technology (IIIT-Delhi), India
}
\date{} 
\begin{document}

\maketitle

\begin{abstract}
Feature selection remains difficult in modern high-dimensional settings, and established methods such as Boruta and Recursive Feature Elimination are either computationally costly or lack a statistically justified stopping criterion for their importance scores. A common heuristic adds random noise features and retains any predictor ranking above the strongest one, but this rule is purely ad hoc. We introduce a method that keeps the noise-augmentation idea while grounding it in theory: each feature's importance is tested against the maximum noise importance using a non-parametric bootstrap hypothesis test, with statistical derivations supporting the algorithm's design. On controlled simulations, the method recovers true signal more consistently than Boruta and Knockoff-based procedures; on diverse real-world datasets, it outperforms Boruta, RFE, and Extra Trees. The result is a robust, principled selector that yields reliable inference, improved prediction, and efficient computation.
\end{abstract}

\smallskip
\noindent\textbf{Keywords:}
Feature selection; Multiple hypothesis testing; Bootstrap inference; Cross-validation; Boruta; Knockoff; Recursive feature elimination; Extra Trees, Hyperparameter tuning.

\section{Introduction} \label{sec:Introduction}
With recent advancements in Machine learning, Deep learning, and Transformer architectures, various applications of high-dimensional data have grown rapidly, fundamentally transforming the landscape of classical predictive modeling \cite{borah2024review, rong2019feature}. Healthcare, bio-informatics, finance, marketing, and materials science now routinely generate structured datasets with thousands of features, capturing complex data patterns through diverse signals. Although rich with potential, the surge in dimensionality also poses a critical challenge. It becomes essential to retain truly informative variables for downstream predictive tasks. High-dimensional data often contain redundant, irrelevant, or noisy variables, which can can mask the underlying signal. This phenomenon makes it difficult for the machine learning models to capture the true data patterns leading to reduced predictive reliability. Isolating the relevant features would effectively enhance predictive performance, accelerate training, reduce overfitting, and improve the interpretability of complex models \cite{brahim2023bird}. Hence, feature selection is pivotal in developing robust and interpretable machine learning and deep learning models.  While in deep learning, although representation learning inherently performs feature abstraction, implementing feature selection strategies can further lead to better model efficacy, faster convergence, reduced computational cost and enhanced model interpretability \cite{figueroa2021towards}.

Traditional approaches for feature selection are broadly categorized as filter \cite{sanchez2007filter}, wrapper \cite{el2016review}, or embedded methods \cite{wang2015embedded}, each leveraging unique advantages \cite{jovic2015review}. Filter methods leverage statistical metrics to assess features independently, whereas wrapper methods evaluate subsets of features based on model performance. Embedded techniques incorporate feature selection during model training, often using regularization or tree-based strategies. Recent advancements such as SelectKBest, Recursive Feature Elimination (RFE) \cite{chen2007enhanced}, Lasso \cite{fonti2017feature}, and tree-based feature importance have shown higher efficacy in identifying relevant features. Among these, Boruta \cite{kursa2010boruta}, a wrapper method built on random forests, stands out for its robustness in selecting all relevant features. The Extra-Trees classifier \cite{alfian2022predicting}has gained prominence for its ensemble-based importance estimation and resistance to overfitting.

Despite the proliferation of feature selection techniques in modern machine learning pipelines, several persistent limitations hinder their efficacy. Firstly, most commonly used algorithms such as Recursive Feature Elimination (RFE) lack an explicit statistical criterion to determine the optimal stopping point for feature retention. This absence of a quantitative threshold often leaves practitioners reliant on ad hoc or empirical decisions, which may compromise model performance. Secondly, when these selection procedures are coupled with computationally intensive models like tree-based models (e.g.,Random Forest, XGBoost), the exhaustive need for hyperparameter tuning at each iteration makes the process expensive, particularly on large-scale or high-dimensional datasets.
While methods like Boruta address the stopping rule by comparing features to their randomized ``shadow" versions, this comes at the cost of doubling the feature space, exacerbating memory usage. Moreover, many existing approaches are fundamentally heuristic, empirical, and data-driven, lacking a principled mechanism to guarantee optimality in information retention or generalization. Perhaps, even after selecting a subset of features, current methods do not ensure these features do not contribute to model overfitting. These challenges highlight a pressing need for feature selection strategies that are both methodically robust and computationally efficient for distinguishing informative signals from noise while safeguarding against model overfitting.

\section{Our Contribution} \label{sec:Our Contribution}

In this paper we tend to fill these gap by proposing a robust and interpretable feature selection technique that leverages model bootstrapping, noise calibration, and rigorous statistical testing to yield reliable predictive features with high generalization potential.

In Section~\ref{sec:Methods}, we introduce our proposed \textit{Noise-Augmented Bootstrap Feature Selection (NABFS)} framework, which integrates controlled noise augmentation, resampling, and non-parametric statistical inference to robustly identify informative features. The method begins by augmenting the real dataset with several artificially generated ``noise'' features drawn from a fixed random distribution, which act as a benchmark for detecting spurious importance. Using bootstrap resampling, the predictive model is repeatedly fitted to multiple resampled datasets, and feature importance are computed for both real and synthetic variables in each iteration. For every bootstrap replicate, the relative importance of each real feature is quantified against the maximum importance observed among the noise features.  To assess whether a given feature consistently outperforms the noise benchmark, a one-sided Wilcoxon signed-rank test is performed across bootstrap iterations, yielding a p-value that captures the statistical evidence of informativeness. To control the overall false discovery rate, p-values are adjusted using the Holm–Bonferroni procedure, ensuring rigorous multiple testing correction. Finally, features with adjusted p-values less than the predefined significance threshold,\textit{~$\alpha$}, are retained as significantly informative features. This approach not only enhances robustness against spurious correlations and overfitting but also provides a statistically grounded mechanism for distinguishing meaningful signals from noise within complex and high-dimensional data.

In Sections~\ref{sec:Simulation Studies} and~\ref{sec:Real Data Application}, we benchmark the proposed \textit{Noise-Augmented Bootstrap Feature Selection (NABFS)} method against established approaches using both controlled simulation studies and real-world datasets. This dual evaluation framework enables a comprehensive assessment of the method’s effectiveness across diverse data settings, thereby demonstrating its robustness and generalizability.

In Section~\ref{sec:Results}, we present a detailed discussion of our findings, highlighting the comparative performance of the proposed method against established feature selection techniques and outlining the future scope of this study. The results demonstrate that our approach consistently outperforms state-of-the-art methods across multiple evaluation settings, establishing a strong empirical benchmark. Comprehensive assessments across diverse machine learning models (logistic regression, random forest, gradient boosting, and extreme gradient boosting classifiers) reveal that models trained on the selected feature subsets achieve superior AUC scores compared to their full-feature counterparts, depicting the robustness of the proposed method in both linear and non-linear regimes. These findings reaffirm that even in an era dominated by complex architectures and data abundance, the principled selection of informative features remains indispensable for enhancing predictive accuracy, interpretability, and computational efficiency in artificial intelligent systems.

Hence, this paper provides a comprehensive theoretical foundation for the proposed method, supported by rigorous empirical validation. The effectiveness of the proposed algorithm is subsequently demonstrated through extensive evaluations on diverse datasets, where it is bench marked against state-of-the-art feature selection techniques to establish its robustness and practical utility.

\section{Methods}
\label{sec:Methods}

\subsection{Population Assumptions}

We consider a population characterized by a response variable $Y$ and $p$ candidate features 
\[
f_1, f_2, \dots, f_p.
\]
Each feature may or may not be statistically related to the response.

Our primary goal is to identify which features exhibit any form of statistical dependence with $Y$. 
Formally, for each feature $f_j$, we test the population-level hypothesis:
\[
H_{0j}: Y \;\perp\!\!\!\perp\; f_j
\quad \text{versus} \quad
H_{1j}: Y \;\not\!\perp\!\!\!\perp\; f_j,
\]
where $\perp\!\!\!\perp$ denotes statistical independence. 
Under the null, the feature $f_j$ carries no information about $Y$.

To quantify this relationship, we define a \emph{population-level importance measure}
\[
\mathcal{I}(f_j, Y) \;=\; \text{a population measure of dependence between } f_j \text{ and } Y,
\]
such as mutual information, correlation, regression coefficient magnitude, or feature importance metrics in tree based algoritms (e.g., XGB) like Gain, SHAP etc.

Equivalently, the hypotheses can be expressed as:
\[
H_{0j}: \mathcal{I}(f_j, Y) = 0,
\qquad
H_{1j}: \mathcal{I}(f_j, Y) > 0.
\]
The measure $\mathcal{I}(f_j, Y)$ is assumed to be nonnegative, for instance when defined as an absolute correlation, mutual information, or coefficient magnitude. 
Hence, the alternative hypothesis corresponds to the presence of strictly positive population importance, indicating statistical dependence between $f_j$ and $Y$.

\subsection{Approximate solution}

Because population importances $\mathcal{I}(f)$ are not directly observable, 
we construct an approximate solution based on resampling and empirical feature 
importance scores. The idea is to solve this hypothesis testing problem by repeatedly perturbing the data, estimating feature importances, 
and comparing candidate features against synthetic noise variables.

The procedure begins by augmenting the given data set with $l$ artificial features
drawn independently from a fixed distribution $\mathcal{D}_\varepsilon$. 
These noise variables play the role of a baseline: any real feature that cannot 
consistently outperform them should not be considered informative.

To approximate the distribution of importances, we employ the bootstrap. 
Specifically, we generate $b$ bootstrap replicates of the augmented dataset, 
each of size $n$. For each replicate, we fit a predictive model, treated as a 
hyperparameter of the framework, and extract feature importance scores. 
Denote by $I^{(i)}_j(l)$ the importance of candidate feature $f_j$ in the 
$i$-th bootstrap when $l$ noise features are present, and by
\[
M^{(i)}(l) \;=\; \max_{1 \leq k \leq l} I^{(i)}(\varepsilon_k; l)
\]
the maximum importance among the $l$ noise variables in the same replicate.

This construction yields a paired sequence of differences
\[
D_i^{(j)}(l) \;=\; I^{(i)}_j(l) - M^{(i)}(l), \qquad i=1,\dots,b,
\]
one per bootstrap replicate. For notational simplicity, we suppress the feature index $j$ in $D_i^{(j)}(l)$ in what follows. 
We then test the one-sided alternative that the candidate feature tends to exceed the noise maximum using the 
\emph{Wilcoxon signed-rank (WSR)} statistic
\[
T^+(l) \;=\; \sum_{i=1}^b R_i(l)\,\mathbf{1}\{D_i(l)>0\},
\]
where $R_i(l)$ is the rank of $|D_i(l)|$ among the nonzero 
$\{|D_1(l)|,\dots,|D_b(l)|\}$ (mid-ranks for ties; zero differences dropped).
Large values of $T^+(l)$ indicate evidence in favor of the alternative $H_{1j}$.
A one-sided $p$-value is obtained from the exact or large-sample WSR null (suppressed the feature index $j$).

Finally, because multiple features are tested simultaneously, $p$-values are 
adjusted using the Holm--Bonferroni procedure to control the family-wise error 
rate (FWER) at level $\alpha$. Features that remain significant after adjustment 
are retained.

\subsection{Algorithmic Summary}


The algorithm below outlines the proposed 
Noise-Augmented Bootstrap Feature Selection (NABFS) method.

\begin{minipage}{\linewidth}
\begin{algorithm}[H]
\label{alg:noise_fs}
\DontPrintSemicolon
\SetKwInOut{Require}{Require}
\SetKwInOut{Return}{Return}

\Require{
Dataset $(X,Y)$ with $p$ candidate features; \\
$l$: number of synthetic noise features; \\
$b$: number of bootstrap replicates; \\
$\mathcal{M}$: choice of predictive model; \\
$\alpha$: significance level (Type 1 error/ FWER control); \\
$\mathcal{D}_\varepsilon$: distribution of noise features (fixed).
}
\Return{Selected feature set $\widehat{\mathcal{S}} \subseteq \{f_1,\dots,f_p\}$.}

\BlankLine
1. Augment the dataset with $l$ synthetic noise features sampled i.i.d.\ from $\mathcal{D}_\varepsilon$.\;

2. For each bootstrap replicate $i = 1,\dots,b$:\;
\Indp
    (a) Draw a bootstrap sample of size $n$ from $(X,Y)$.\;
    (b) Fit the chosen model $\mathcal{M}$ and compute feature importances.\;
    (c) Record $I^{(i)}_j(l)$ for each candidate feature $f_j$ and the maximum noise importance
        \[
        M^{(i)}(l) \;=\; \max_{1 \leq k \leq l} I^{(i)}(\varepsilon_k; l).
        \]
    (d) Form the paired difference $D_i(l) = I^{(i)}_j(l) - M^{(i)}(l)$ for each $f_j$.\;
\Indm

3. For each feature $f_j$, compute the one-sided Wilcoxon signed-rank statistic $T^+(l)$ from $\{D_i(l)\}_{i=1}^b$ and obtain a p-value.\;

4. Apply the Holm--Bonferroni procedure at level $\alpha$ to adjust p-values.\;

5. Return $\widehat{\mathcal{S}} = \{ f_j : \text{adjusted p-value} < \alpha \}$.\;

\end{algorithm}
\end{minipage}

\subsection{Practical Considerations}

Several elements of the framework are treated as hyperparameters:

\begin{itemize}
    \item \textbf{Number of noise features ($l$):} Larger $l$ yields a stricter benchmark, 
    reducing false positives but lowering power. Our theoretical analysis 
    (Section~\ref{sec:theory}) shows that both Type~I error and power decrease monotonically with $l$ under mild assumptions.

    \item \textbf{Number of bootstrap replicates ($b$):} Increasing $b$ stabilizes the empirical 
    distributions but raises computational cost. In practice, $b$ is chosen to balance stability 
    with computational feasibility.

    \item \textbf{Choice of predictive model ($\mathcal{M}$):} The framework is model-agnostic: 
    any learner that provides feature importance scores may be specified. Examples include 
    tree-based models, linear models with coefficients as importance measures, or neural 
    networks with attribution scores. In practice, one may select $\mathcal{M}$ from a 
    candidate set by held-out predictive performance or stability of selections.

    \item \textbf{Significance level ($\alpha$):} The target FWER level is a fixed input that 
    determines the stringency of the Holm– Bonferroni correction. Smaller values of $\alpha$ 
    reduce false discoveries but may also exclude weak signals.

    \item \textbf{Noise distribution ($\mathcal{D}_\varepsilon$):} Noise features are sampled 
    independently from a fixed distribution. While Gaussian noise is common, other distributions 
    (e.g., uniform) may be used. The choice of $\mathcal{D}_\varepsilon$ affects the scale of 
    the benchmark but not the validity of the testing framework.
\end{itemize}

The procedure is approximate, relying on bootstrap distributions of empirical importances 
as proxies for unobservable population importances. Despite this approximation, as demonstrated 
in Section~\ref{sec:experiments}, the method provides a robust and practical tool for 
distinguishing informative features from noise. In applications, $\alpha$ is typically fixed at a conventional level (e.g., $0.05$), 
and $\mathcal{D}_\varepsilon$ can be taken as standard Gaussian. The key tunable parameters are 
the number of noise features $l$ and the underlying model $\mathcal{M}$; their choices 
may be guided by cross-validation, predictive performance on held-out data, and stability analyses 
of selected feature sets (see Section~\ref{sec:experiments}).

\section{Theoretical Properties}
\label{sec:theory}

The procedure of Section~2 provides an approximate solution to the population hypothesis 
testing problem of Section~2.1. In this section we establish its theoretical properties. 
We show that the number of synthetic noise features $l$ 
governs a key trade-off between Type~I error and statistical power. 
All proofs are deferred to the Appendix.

\subsection{Type I Error and Power as Functions of \texorpdfstring{$l$}{l}}

We next study how the number of synthetic noise features, $l$, affects the statistical 
properties of the test. Recall that for each bootstrap replicate $i$, the paired difference is
\[
D_i(l) \;=\; I^{(i)}_j(l) - M^{(i)}(l),
\qquad
M^{(i)}(l) \;=\; \max_{1\le k\le l} I^{(i)}(\varepsilon_k; l),
\]
and the Wilcoxon signed-rank statistic is
\[
T^+(l) \;=\; \sum_{i=1}^b R_i(l)\,\mathbf{1}\{D_i(l)>0\},
\]
where $R_i(l)$ is the rank of $|D_i(l)|$ among the nonzero absolute values.

\begin{proposition}[Type I Error is Decreasing in $l$]
For any fixed feature under the null hypothesis, the Type~I error probability decreases 
as $l$ increases. If
\[
\alpha_j(l) \;=\; \mathbb{P}\!\left( T^+(l)\ge c_\alpha(b) \;\middle|\; H_{0j} \text{ true} \right),
\]
then
\[
\alpha_j(l+1) \;\le\; \alpha_j(l).
\]
\end{proposition}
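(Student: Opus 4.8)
The plan is to reduce the claim to a \emph{monotonicity} statement about the statistic and then to transport that monotonicity through a coupling of the $l$-noise and $(l{+}1)$-noise experiments. The first ingredient is the classical Walsh-average representation of the signed-rank statistic: writing the Walsh averages $W_{ik}=\tfrac12\bigl(D_i(l)+D_k(l)\bigr)$ for $1\le i\le k\le b$, one has
\[
T^+(l) \;=\; \sum_{1\le i\le k\le b} \mathbf{1}\{D_i(l)+D_k(l)>0\}.
\]
I would prove this by the standard argument: a Walsh average is positive iff its larger-magnitude summand is positive, and for each positive $D_k$ the pairs $\{i,k\}$ with $|D_i|\le|D_k|$ number exactly $R_k$, so summing over positive $D_k$ recovers $\sum_{k:D_k>0}R_k$ (zero/tied differences are dropped or resolved by mid-ranks, which is a measure-zero event when importances have a continuous law). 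The crucial consequence is that $T^+$ is a \emph{coordinatewise nondecreasing} function of the vector $(D_1(l),\dots,D_b(l))$, since each indicator is nondecreasing in each difference.

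Next I would couple the two experiments. Run the $(l{+}1)$-noise procedure using the \emph{same} bootstrap index sets and the \emph{same} first $l$ noise features $\varepsilon_1,\dots,\varepsilon_l$ as the $l$-noise procedure, appending only one fresh noise feature $\varepsilon_{l+1}$. Under this coupling the noise maximum can only grow,
\[
M^{(i)}(l+1) \;=\; \max\bigl\{M^{(i)}(l),\, I^{(i)}(\varepsilon_{l+1};l+1)\bigr\} \;\ge\; M^{(i)}(l),
\]
while the mild regularity hypothesis on the importance functional guarantees that introducing an extra irrelevant competitor does not increase the candidate's own importance, $I^{(i)}_j(l+1)\le I^{(i)}_j(l)$ (this is automatic, for instance, for normalized importances that sum to a fixed budget, where a new noise feature can only dilute the candidate's share). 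Combining these two inequalities replicate-by-replicate yields $D_i(l+1)\le D_i(l)$ for every $i$ simultaneously on the coupled space. By the coordinatewise monotonicity from the first step, $T^+(l+1)\le T^+(l)$ pointwise, and since the critical value $c_\alpha(b)$ depends only on $b$ and $\alpha$ (not on $l$), we obtain the event inclusion $\{T^+(l+1)\ge c_\alpha(b)\}\subseteq\{T^+(l)\ge c_\alpha(b)\}$, whence $\alpha_j(l+1)\le\alpha_j(l)$.

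The main obstacle is the justification that adding a noise feature does not inflate the candidate's importance: because the model is refit with $l{+}1$ rather than $l$ synthetic features, both $I^{(i)}_j$ and the surviving noise importances can in principle shift in either direction, so a literal pathwise inequality need not hold for an arbitrary black-box importance measure. I would therefore isolate this as the explicit ``mild assumption'' (dilution/monotonicity of the importance functional under addition of irrelevant features) and, where only a weaker distributional statement is available, fall back on stochastic domination: it suffices to show $\bigl(D_i(l+1)\bigr)_{i=1}^b \preceq_{\mathrm{st}} \bigl(D_i(l)\bigr)_{i=1}^b$ in the multivariate stochastic order, which, together with the fact that $T^+$ is a bounded coordinatewise-nondecreasing functional, propagates to $T^+(l+1)\preceq_{\mathrm{st}}T^+(l)$ and hence to the same tail inequality at the fixed threshold $c_\alpha(b)$. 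As a sanity check supporting the direction of the bound, under $H_{0j}$ the candidate is exchangeable with the noise features, so $\mathbb{P}\bigl(D_i(l)>0\bigr)\le \tfrac{1}{l+1}$ decreases in $l$, confirming that enlarging the noise pool shifts the differences downward exactly as the coupling requires.
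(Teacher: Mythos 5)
Your proof is correct, and it rests on exactly the same substantive hypothesis as the paper's — the monotone-differences assumption (the paper's (A1), which you correctly isolate as the ``dilution'' requirement $I^{(i)}_j(l+1)\le I^{(i)}_j(l)$ together with a non-decreasing noise maximum) plus the fact that $c_\alpha(b)$ does not depend on $l$ — but it reaches the key monotonicity fact by a genuinely different route. The paper proves a bespoke rank decomposition, $T^+ = \tfrac{p(p+1)}{2} + \sum_{i\in P}\sum_{j\in N}\mathbf{1}\{|D_i|>|D_j|\}$, and then argues by cases: if the sign pattern is unchanged, the baseline term is fixed and every positive--negative indicator is non-increasing; if a crossover occurs, a counting argument shows a strict drop of at least $1+c_t$. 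You instead invoke the classical Walsh-average (Tukey) representation $T^+=\#\{(i,k): i\le k,\ D_i+D_k>0\}$, from which coordinatewise monotonicity of $T^+$ in $(D_1,\dots,D_b)$ is immediate, and then transport the pathwise inequality $D_i(l+1)\le D_i(l)$ through a coupling of the two experiments. Two comparative remarks. First, your route buys brevity and generality: the monotonicity step is one line rather than a lemma plus two case-analysis propositions, and your stochastic-domination fallback correctly observes that only a distributional ordering of the difference vectors is needed, which is strictly weaker than the paper's almost-sure (pathwise) ordering. Second, the paper's route buys quantification: the crossover analysis yields a strict decrease of at least $1+c_t$, which underwrites the appendix's stronger claim that the Type~I error \emph{strictly} decreases whenever crossovers occur with positive probability; your argument as written gives only the weak inequality, though strictness is also recoverable in your framework from the diagonal Walsh terms $\mathbf{1}\{2D_t>0\}$, which flip from $1$ to $0$ under a crossover. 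One small internal inconsistency worth fixing: your displayed identity $M^{(i)}(l+1)=\max\{M^{(i)}(l),\,I^{(i)}(\varepsilon_{l+1};l+1)\}$ is not literally valid, because refitting the model with $l+1$ noise features perturbs the importances of the first $l$ noise features as well; you acknowledge this two sentences later when you isolate the explicit assumption, so the repair is already present, but the earlier display should be stated as part of that assumption rather than as a consequence of the coupling.
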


\noindent
\textit{Intuition:} Adding more noise features raises the replicate-wise maximum noise benchmark 
$M^{(i)}(l)$, which weakly lowers the paired differences $D_i(l)$; hence the WSR statistic $T^+(l)$ 
cannot increase, making false rejections less likely.

\begin{proposition}[Power is Decreasing in $l$]
For any fixed feature under the alternative hypothesis, the power of the test decreases 
as $l$ increases. If
\[
\beta_j(l) \;=\; \mathbb{P}\!\left( T^+(l)\ge c_\alpha(b) \;\middle|\; H_{1j} \text{ true} \right),
\]
then
\[
\beta_j(l+1) \;\le\; \beta_j(l).
\]
\end{proposition}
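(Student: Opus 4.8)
The plan is to establish the result by a pathwise coupling argument that mirrors the proof of the preceding proposition, the only change being that we integrate the resulting event inclusion under $H_{1j}$ rather than $H_{0j}$. The crucial structural fact is that the WSR critical value $c_\alpha(b)$ depends only on the number of bootstrap replicates $b$ (equivalently, on the number of nonzero paired differences) and not on $l$, since the signed-rank null law is determined by $b$ alone. Consequently, if I can show that on every realization of a suitably coupled construction one has $T^+(l+1) \le T^+(l)$, then the events satisfy $\{T^+(l+1) \ge c_\alpha(b)\} \subseteq \{T^+(l) \ge c_\alpha(b)\}$ pointwise, and taking probabilities conditional on $H_{1j}$ gives $\beta_j(l+1) \le \beta_j(l)$ immediately.

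First I would fix the coupling. For the $l$- and $(l+1)$-noise configurations I reuse the same $b$ bootstrap index sets and the same first $l$ noise draws, adding only one fresh noise feature $\varepsilon_{l+1}$. Under the mild stability assumption already invoked in the Type~I analysis---that augmenting by one extra noise column does not raise the candidate's importance and leaves the existing noise importances unchanged---I obtain, replicate by replicate,
\[
M^{(i)}(l+1) = \max\bigl(M^{(i)}(l),\, I^{(i)}(\varepsilon_{l+1}; l+1)\bigr) \ge M^{(i)}(l),
\qquad I^{(i)}_j(l+1) \le I^{(i)}_j(l).
\]
Subtracting yields the coordinatewise domination $D_i(l+1) \le D_i(l)$ for every $i = 1,\dots,b$; the entire difference vector moves down under the coupling.

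The main obstacle is then showing that the WSR statistic is monotone under a coordinatewise decrease of its input vector. This is not obvious from the definition $T^+ = \sum_i R_i\,\mathbf{1}\{D_i>0\}$, because lowering one coordinate can simultaneously flip its sign and reshuffle the ranks $R_i$ of the others, so a naive term-by-term comparison fails. I would bypass this entirely using the Walsh-average representation of the signed-rank statistic,
\[
T^+ \;=\; \#\bigl\{(i,i') : i \le i',\; D_i + D_{i'} > 0\bigr\},
\]
which counts the positive pairwise averages (the diagonal $i=i'$ recovering $\mathbf{1}\{D_i>0\}$). In this form each summand is a nondecreasing indicator of a linear functional of the difference vector, so the domination $D(l+1) \le D(l)$ forces every Walsh sum to satisfy $D_i(l+1)+D_{i'}(l+1) \le D_i(l)+D_{i'}(l)$ and hence every indicator to weakly decrease. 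Summing gives the pathwise inequality $T^+(l+1) \le T^+(l)$, which is exactly the engine needed.

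Finally I would assemble the pieces: the pathwise monotonicity holds on every realization regardless of which hypothesis governs the data, so it applies verbatim under $H_{1j}$; combined with the $l$-independence of $c_\alpha(b)$ it yields the event inclusion and therefore $\beta_j(l+1) \le \beta_j(l)$. The only remaining technicalities are the handling of zero differences (dropped) and ties (mid-ranks); these are routine, since the Walsh identity extends to mid-ranks and discarding exact zeros---a measure-zero event in the idealized continuous setting---only removes nonpositive Walsh terms, leaving the monotonicity intact.
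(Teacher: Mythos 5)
Your proof is correct, but it reaches the key pathwise inequality $T^+(l+1)\le T^+(l)$ by a genuinely different route than the paper. The paper proves a decomposition lemma writing $T^+(l)$ as the baseline $p(l)\big(p(l)+1\big)/2$ plus the positive-versus-negative comparison count $\sum_{i\in P(l)}\sum_{j\in N(l)}\mathbf{1}\{|D_i(l)|>|D_j(l)|\}$, and then runs a two-case analysis: if no index crosses from positive to non-positive, the baseline is unchanged and each comparison indicator weakly decreases; if a crossover occurs, explicit bookkeeping (baseline drops by $p$, the crossing index loses its $c_t$ wins, recoveries against it total at most $p-1$) yields a strict decrease by at least $1+c_t$. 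Your Walsh-average identity $T^+=\#\{(i,i'):i\le i',\ D_i+D_{i'}>0\}$ collapses this machinery: coordinatewise domination $D_i(l+1)\le D_i(l)$ forces every pairwise sum, hence every indicator, to weakly decrease, with no case distinction needed---sign flips are absorbed automatically, which is precisely the difficulty that forces the paper into its crossover analysis. What your route buys is brevity and robustness (monotonicity of each Walsh indicator in the difference vector is immediate); what the paper's route buys is a quantitative strictness statement (decrease by at least $1+c_t$ per crossover), which it uses to assert strict power decrease when crossovers occur with positive probability---though your form recovers strictness just as easily, since a crossover flips the diagonal term $\mathbf{1}\{2D_t>0\}$ from one to zero. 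One small imprecision: your claim that discarding a zero difference $D_i(l+1)=0$ ``only removes nonpositive Walsh terms'' is not quite right, since the removed pairs have sums equal to $D_{i'}(l+1)$, which may be positive; but removing terms can only lower the count further, and under (A2) zeros occur with probability zero, so the conclusion is unaffected. Both arguments rely identically on (A1) and on the fact that $c_\alpha(b)$ does not vary with $l$, so the final passage from pathwise monotonicity to $\beta_j(l+1)\le\beta_j(l)$ matches the paper's corollary.
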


\noindent
\textit{Intuition.} Increasing $l$ makes the noise maximum stricter, so even truly 
informative features are less likely to consistently exceed the benchmark. 
Because $D_i(l)$ is non-increasing in $l$ by Assumption~(A1), the WSR statistic 
$T^+(l)$ is also pointwise non-increasing (see Appendix: WSR monotonicity lemmas), 
and thus the probability of crossing any fixed rejection threshold decreases.

\subsection{Practical Implications}

The two propositions show that the noise parameter $l$ governs a fundamental trade-off: 
larger $l$ reduces both Type~I error and statistical power, producing a more conservative 
procedure; smaller $l$ yields higher power but risks including spurious features. 
In practice, $l$ should be treated as a hyperparameter and tuned to balance 
conservativeness with sensitivity to weak signals.

\section{Data and experiments} \label{sec:experiments}

\subsection{Simulation Studies} \label{sec:Simulation Studies}
We also synthesized data mimicing real use cases, conditioning boundary and edge cases. While simulating the data, we considered inclusion of factors such as sparsity, collinearity, and noise, which can allow for the systematic testing of the proposed approach across intricate settings. Hence, to systematically evaluate the performance of the proposed Noise-Augmented Bootstrap Feature Selection (NABFS) method, we conducted a comprehensive Monte Carlo simulation study against established feature selection techniques, which included Boruta \cite{kursa2010boruta} and Model-X Knockoffs \cite{candes2018panning}. The study was designed to assess the reliability and efficiency of each method under controlled yet diverse data-generating scenarios.

We simulated data from a logistic regression model with a compound symmetric correlation structure, a common framework for assessing feature selection performance in high-dimensional settings. Specifically, we designed a matrix $X \in \mathbb{R}^{n \times p}$, where each observation vector $X_i$ was drawn as
\[
X_i = \sqrt{1-\rho}\, Z_i + \sqrt{\rho}\, C,
\]
with $Z_i \sim N(0, I_p)$ representing independent noise and $C \sim N(0,1)$ denoting a shared latent factor introducing correlation among features. The true coefficient vector $\beta \in \mathbb{R}^p$ was defined such that the first $k$ features were true signals, each assigned a coefficient sampled uniformly from the interval $[-2, 2]$, while the remaining $p - k$ coefficients were set to zero. Binary outcomes were generated according to a logistic model,
\[
\Pr(Y_i = 1 \mid X_i) = \frac{1}{1 + \exp(-X_i^\top \beta)}.
\]

The experimental setup considered 50 candidate features ($p=50$), of which 20 were true signals ($k=20$), across sample sizes $n \in \{500, 1000, 3000\}$. The correlation parameter $\rho$ varied across 12 levels $\{0, 0.01, 0.02, 0.05, 0.1, 0.2, 0.4, 0.5, 0.6, 0.8, 0.9, 1\}$, allowing systematic examination of the effect of feature correlation on selection stability. For NABFS, we varied the number of synthetic noise features ($l \in \{1, 2, 3, 4, 5, 6, 7\}$) to study its robustness across different augmentation intensities.

Each configuration was repeated 30 Monte Carlo replicates to ensure statistical reliability, with 3-fold cross-validation applied within each replicate for stability assessment. 

For every combination of $(n, \rho)$, the data generation process, feature selection technique, and performance evaluation were repeated 30 times, and the averaged results across replicates were reported. The summary metrics with corresponding parameters $\{n, \rho, p, k\}$, which included average power, Type I error, and Jaccard index were compiled and have been reported in the result section.

This design enables a rigorous and reproducible comparison of NABFS with Boruta and Model-X Knockoffs across varying sample sizes, correlation structures, and noise conditions, providing a comprehensive understanding of the methods’ stability, efficiency, and accuracy.

\subsection{Real Data Application} \label{sec:Real Data Application}

We have tested our designed architecture for feature selection across diverse dataset to establish the generalizability of the technique. We catered datasets related to healthcare, genomics, proteomics, finance, and weather to effectively validate the robustness of our technique. In the previous section, we have demonstrated different scenario how we simulated the data based on various scenarios and tested the effectivity of the model. 

Some of the real-world datasets related to healthcare were curated from the UCI Machine Learning Repository to understand the competence of feature selection technique. Clinical data describing the spectrum of complications following myocardial infarction, contributed by Golovenkin et al. \cite{golovenkin2020trajectories}, presented a rich domain-relevant attribute set suitable for rigorous medical data interrogation. The Parkinson’s Disease Classification Dataset, made available by Sakar et al. \cite{sakar2013collection}, offers extensive biomedical voice measurements from individuals diagnosed with Parkinson’s disease, enabling exploratory analysis of high-dimensional biological signals through advanced model selection methodologies. ShockModes dataset, introduced by Ridam Pal et al. \cite{pal2022shockmodes}, is derived from the MIMIC-III intensive care unit database and comprises 17,294 ICU stays that were scored for the Shock Index (SI), a clinical metric used to prognosticate outcomes in critical care and emergency settings. The dataset identifies episodes of abnormal SI defined by sustained periods of elevated SI ($> 0.7$ for over 30 minutes) following a preceding window of normal SI, resulting in a curated cohort of 337 normal and 84 abnormal SI instances after exclusion criteria. The paper constituted of of three cohorts, of which two cohorts, the multi-modal cohort and the uni-modal cohort (comprised of unstructured clinical notes embedding) were used for testing the effectivity of our method. 

We even tested our dataset on molecular simulation data provided from the paper "Decrypting the mechanistic basis of CRISPR/Cas9 protein". This CRISPR/Cas9 study performed by Panda et al. \cite{panda2024deciphering} used machine learning to classify metastable heteroduplex states in SpCas9 and FnCas9 complexes based on structural features derived from molecular dynamics simulations. The heteroduplex-state dataset contained 57,074 frames for SpCas9 and 52,080 for FnCas9, generated from long-timescale MD trajectories of both on-target and off-target systems. Each frame was labeled using a Markov State Model that identified discrete structural macrostates based on the dynamics of the RNA:DNA hybrid. For every frame, structural features were extracted, covering both RNA:DNA hybrid geometry (rotational and translational parameters for base pairs 17–20 relative to PAM, interfacial contacts, and interaction energy) and protein structural properties (backbone RMSD, radius of gyration, SASA, key angular descriptors, and inter-domain contacts). The dataset was organized to support classification using three input modalities: RNA:DNA hybrid-only, protein-only, and combined features, enabling systematic comparison of their contributions to distinguishing metastable states of the RNA-DNA heteroduplex.

For financial dataset, we catered data from Kaggle as an open resource. The Credit Card Fraud Detection dataset \cite{awoyemi2017credit}, curated by the Machine Learning Group of ULB and available via the Kaggle data platform, comprises anonymized transactions made by European cardholders over a two-day period in September 2013. The Loan Default Prediction Dataset \cite{robinson2024loan}, sourced from a Coursera challenge and made publicly available on Kaggle, includes detailed financial, demographic, and behavioral attributes describing borrowers’ profiles and loan histories. 

The Student Performance Dataset, generated by Cortez \cite{silva2008using}, encompasses multidimensional academic, demographic, socio-economic, and institutional variables, providing a robust framework to benchmark feature selection approaches within educational research. The Airlines Customer Satisfaction dataset \cite{jethwa2024comprehensive} comprises over 120,000 passenger records aimed at predicting overall satisfaction with airline services. It serves as a comprehensive benchmark for evaluating factors that influence passenger contentment and loyalty.

The collation of datasets from real-world setting both from open-source and private repository enables systematic and reproducible evaluation of our robust method (NABFS), offering insights into the robustness and adaptability of the proposed method under noise and artifact-prone scenarios. Table X summarizes key dataset characteristics which includes sample size, target variables, and core attributes while highlighting the heterogeneity and complexity of the data used in this study.

\subsection{Metric Evaluation}

The performance of the proposed method was systematically evaluated using a comprehensive set of metrics tailored for both simulated and real-world data. For simulation studies, model efficacy was assessed using \textit{Power} \cite{cohen2013statistical}, \textit{Type~I Error} \cite{banerjee2009hypothesis}, and the \textit{Jaccard Index} \cite{real1996probabilistic}, which collectively captures sensitivity, specificity, and selection stability. For real-world datasets, performance was further validated using the \textit{F1-score} and \textit{AUC-score}, reflecting the classification accuracy and discriminative capability of models trained on the selected feature subsets. A brief description of each metric is provided below.

\begin{itemize}
\item \textbf{Power:} Measures the proportion of true signal features correctly identified, indicating the model’s ability to detect relevant variables.
\item \textbf{Type~I error:} Represents the proportion of irrelevant (noise) features incorrectly selected, quantifying the model’s resistance to overfitting.
\item \textbf{Jaccard index:} Evaluates the overlap between the selected and true feature sets, summarizing the overall selection accuracy and stability.
\item \textbf{F1-score:} The harmonic mean of precision and recall, offering a balanced measure of classification performance, particularly in imbalanced datasets.
\item \textbf{AUC-score:} The area under the ROC curve, providing a threshold-independent evaluation of the model’s discriminative strength.
\end{itemize}

Together, these metrics provide a holistic assessment of the proposed method, capturing its statistical reliability in controlled simulations and its predictive robustness on real-world datasets.

\section{Results} \label{sec:Results}

Feature selection remains a daunting challenge in machine learning and deep learning, owing to the vast search space of potential predictors and the instability induced by correlated or noisy features. To address this, we developed the Noise-Augmented Bootstrap Feature Selection (NABFS), a principled approach designed to provide reliable feature selection in high-dimensional settings. The precedence of our methodology occurs in three stages. First, we examine limitations in existing probe-based feature selection method, particularly their susceptibility to correlation-induced instability. Next, we construct a refined theoretical formulation that establishes the statistical foundations of our approach. Finally, we evaluate the method across both simulated and real-world datasets to assess its empirical performance and generalizability. The assessment of NABFS was conducted in two broad segments, beginning with controlled simulation studies designed to characterize its behavior under specified conditions.

For assessing the performance of our proposed Noise-Augmented Bootstrap Feature Selection (NABFS) method, we designed an experimental setup of logistic regression framework with a compound symmetric correlation structure for data synthesis. To evaluate the performance of the proposed Noise-Augmented Bootstrap Feature Selection (NABFS) method, we developed a controlled simulation framework based on a logistic regression model with a compound-symmetric correlation structure for data synthesis. Two regimes were examined, one where the sample size exceeded the number of features ($n > p$) and another reflecting the high-dimensional setting ($p > n$). The simulations systematically varied the correlation parameter ($\rho$) to examine the influence of inter-feature dependence on selection performance. Each configuration was repeated across 30 Monte Carlo replicates to ensure statistical reliability, and results were averaged across runs. Comparative analyses were performed against two established baseline methods, Boruta and Model-X Knockoffs, across three sample sizes ($n = 300, 500,$ and $800$). The evaluation metrics included average Power, Type I Error, and Jaccard Index, as summarized in Figure~\ref{fig:pgtn}.

\begin{figure}[ht]
    \centering
    \includegraphics[width=0.9\linewidth]{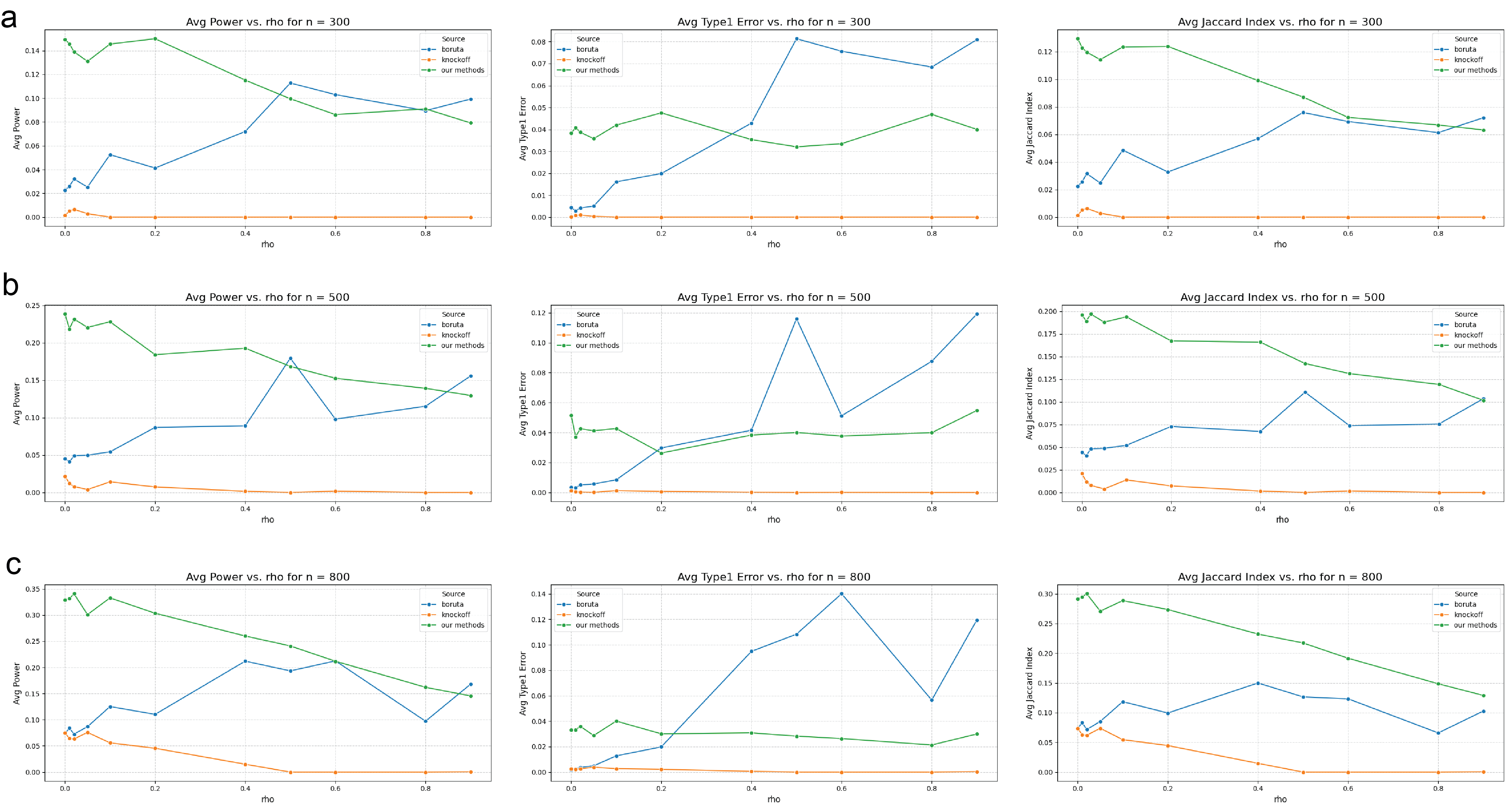}
    \caption{\textbf{Performance of NABFS under the \(p > n\) setting, evaluated using Power, Type~I Error, and Jaccard Index across varying correlation levels.}}
    \label{fig:pgtn}
\end{figure}

Our experiments in the high-dimensional regime ($p > n$) were conducted with a fixed feature space of $p = 900$, comprising 700 true signal features and 200 synthetic noise features. Under this setting, for \textbf{$n = 300$}, NABFS demonstrates consistently higher average power across all correlation levels, particularly under low-to-moderate $\rho$ values ($\rho < 0.5$). This indicates the method’s strong sensitivity in identifying true signal features even in limited data samples. Boruta shows gradual improvement with increasing correlation, while Knockoff remains largely conservative with negligible gain in power. The Type I error trends (Figure) reveal that NABFS maintains moderate false-positive rates, achieving a balanced trade-off between sensitivity and specificity. The Jaccard Index (Figure) further confirms NABFS’s robustness, maintaining superior overlap between the selected and true feature sets compared to both baselines. As the sample size increases to \textbf{$n = 500$}, the advantage of NABFS becomes more pronounced. The average power curve stabilizes at higher levels, reflecting improved statistical efficiency with larger data. Both Boruta and Knockoff show marginal improvement; however, Boruta continues to exhibit inflated Type I error, while Knockoff remains conservative and fails to recover true features effectively. NABFS sustains its equilibrium between power and false discovery, validating its scalability across varying data regimes. The Jaccard Index improves across all methods, but NABFS continues to achieve the highest stability and selection accuracy. For \textbf{$n = 800$}, NABFS exhibits near-saturation in power across all correlation levels, indicating that the majority of true informative features are consistently identified. Type I error remains well-controlled, with minimal increase despite higher data dimensionality. In contrast, Boruta’s variability widens under strong correlations ($\rho > 0.6$), while Knockoff continues to underperform in terms of power. The Jaccard Index trends reaffirm NABFS’s robustness and generalizability, maintaining the highest alignment with the ground-truth feature set across all conditions. Overall, across all sample sizes and correlation levels, NABFS achieves a robust balance between sensitivity and specificity, consistently outperforming existing feature selection methods. Its stable performance with increasing sample size emphasizes upon scalability and robustness, validating NABFS as a reliable feature selection technique for high-dimensional data with correlated and noisy features.

To evaluate NABFS in the \(n > p\) setting, we considered a simulation design with \(p = 50\) candidate features, of which \(k = 20\) were true signals, and sample sizes varied across \(n \in \{500, 1000, 3000\}\). This configuration enabled a controlled examination of how increasing data availability influences the performance and stability of feature selection technique.
As shown in Figure~\ref{fig:ngtp}, NABFS exhibits a consistent increase in average power with growing sample size, indicating its ability to accurately recover true signal features with data abundance. At $n = 500$, NABFS achieves moderate power ($\approx 0.8$ at $\rho = 0$) with controlled Type I error ($< 0.01$), maintaining balanced sensitivity and specificity. Although mild degradation in power is observed at higher correlation levels ($\rho > 0.6$), the method remains stable, as reflected by the gradual decline in the Jaccard Index, demonstrating robustness against multicollinearity. When the sample size increases to $n = 1000$, NABFS’s performance markedly improves, with average power exceeding $0.85$ at low correlation and maintaining higher stability across $\rho$ values. Type I error remains well-regulated for weak to moderate correlations ($\rho < 0.3$), suggestive of less noise retention even as feature dependencies increases. The Jaccard Index trends mirror the power trajectory, confirming that the selected feature sets increasingly align with the true informative subset as sample size grows. At the largest sample size, $n = 3000$, NABFS achieves near-saturated power across low-to-moderate correlations ($\rho < 0.5$), approaching perfect distillation of true signal features (Power $\approx 0.9$ and Jaccard $\approx 0.85$ at $\rho = 0$). Even under stronger correlation regimes, NABFS maintains competitive performance with modest Type I error inflation, highlighting its resilience in scenarios when correlated features are more. These results collectively affirm the robustness of NABFS in scenarios where data abundance enhances feature retention capacity. Overall, across all configurations, NABFS demonstrates a desirable trade-off between sensitivity and specificity, with consistent improvement in selection stability as data samples ($n$) increases. The convergence of Power and Jaccard Index values in the large-sample regime substantiates the theoretical underpinnings of NABFS, validating the efficacy of our technique.

\begin{figure}[ht]
    \centering
    \includegraphics[width=0.9\linewidth]{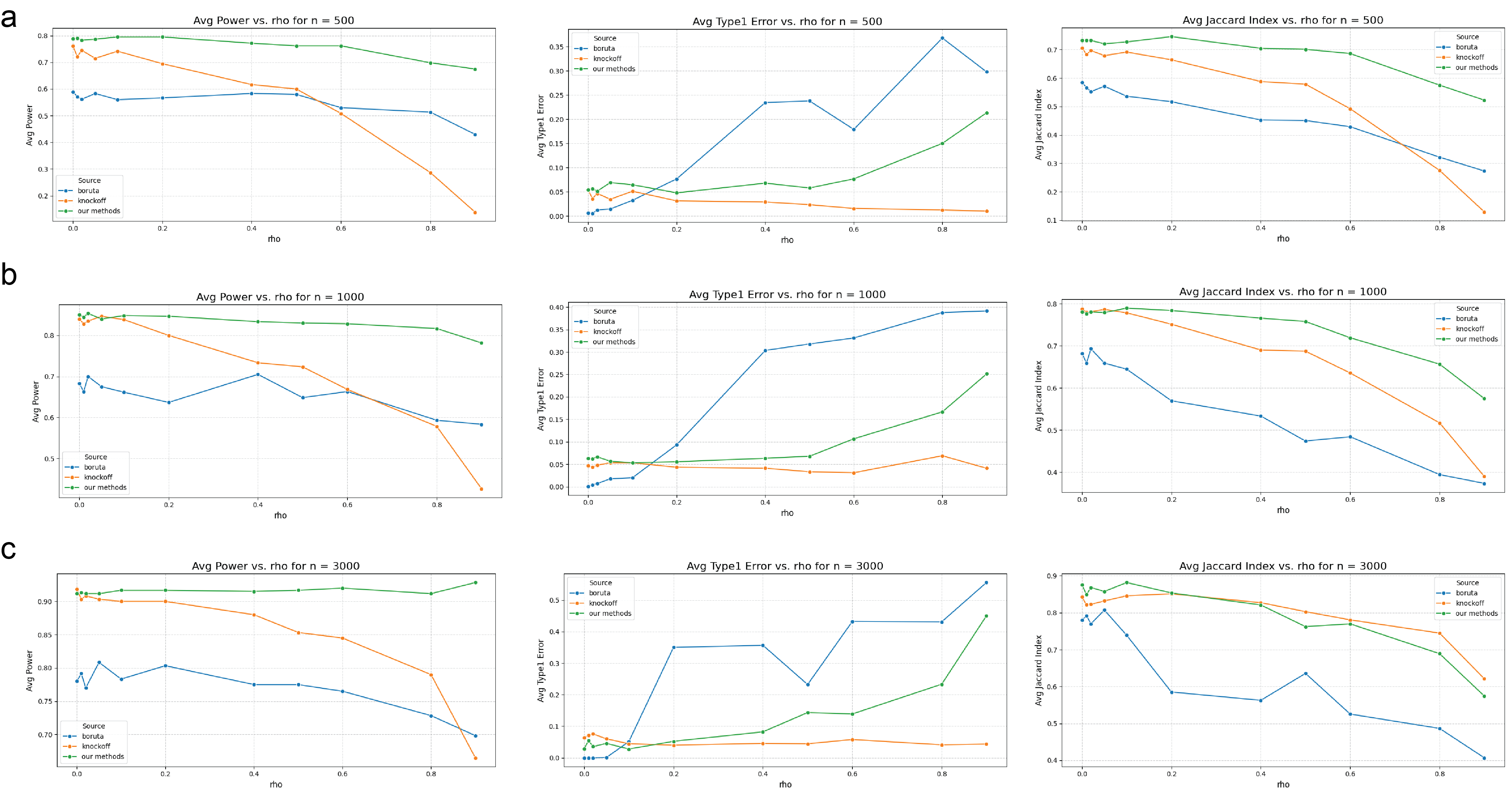}
    \caption{\textbf{Performance of NABFS under the \(n > p\) setting, evaluated using Power, Type~I Error, and Jaccard Index across varying correlation levels.}}
    \label{fig:ngtp}
\end{figure}

Building on the extensive simulation studies that demonstrated the effectiveness of the proposed Noise-Augmented Bootstrap Feature Selection (NABFS) method, we further evaluated its practical utility across a diverse collection of real-world datasets (healthcare, finance and biological sequences). This empirical assessment was designed to test whether the advantages observed in controlled synthetic settings are also translated to heterogeneous, noisy and domain-specific datasets. NABFS was systematically benchmarked against established feature selection techniques, including Boruta, Recursive Feature Elimination (RFE) and Extra Trees–based importance scoring, using F1-score and Area Under the ROC Curve (AUC) as evaluation metrics. Hyper-parameter selection for our feature selection technique was based upon AUC-based performance trends obtained across multiple noise intensities for both the model base learners (Logistic regression and XGBoost). For each base learner (Logistic and Xgboost), AUC values were evaluated across multiple noise intensities, and the final combination model–noise configuration was chosen based on the highest observed AUC, enabling principled, data-driven selection rather than heuristic tuning. The logistic regression–based model used a significance threshold of $\alpha = 0.01$, whereas the XGBoost-based model adopted $\alpha = 0.05$, reflecting differing sensitivity to linear and non-linear signal patterns.

\begin{table*}[htbp]
\centering
\renewcommand{\arraystretch}{1.2}

\resizebox{\textwidth}{!}{%
\begin{tabular}{p{3.5cm}cccccccccccc}
\hline
 & \multicolumn{3}{c}{\textbf{Our Technique (NABFS)}} 
 & \multicolumn{3}{c}{\textbf{Boruta}} 
 & \multicolumn{3}{c}{\textbf{RFE}} 
 & \multicolumn{3}{c}{\textbf{Extra Tree Classifier}} \\
\cline{2-13}

\textbf{Dataset} 
& \textbf{\% Retained} & \textbf{F1} & \textbf{AUC} 
& \textbf{\% Retained} & \textbf{F1} & \textbf{AUC} 
& \textbf{\% Retained} & \textbf{F1} & \textbf{AUC} 
& \textbf{\% Retained} & \textbf{F1} & \textbf{AUC} \\
\hline

ShockModes (AdaBoost + W2V + Doc2Vec) 
& 15 & 0.744 & 0.596 
& 5 & 0.733 & 0.622 
& 5 & 0.723 & 0.636 
& 16 & 0.740 & 0.650 \\

ShockModes (GBoost + W2V + Doc2Vec) 
& 9 & 0.756 & 0.656 
& 5 & 0.772 & 0.654 
& 5 & 0.745 & 0.673 
& 16 & 0.75 & 0.63 \\

Parkinson’s Disease 
& 12 & 0.711 & 0.827 
& 16 & 0.707 & 0.824 
& 16 & 0.695 & 0.823 
& -- & -- & -- \\

Credit Card 
& 59 & 0.999 & 0.968 
& 100 & 0.999 & 0.967 
& 100 & 0.999 & 0.967 
& -- & -- & -- \\

Loan Default 
& 75 & 0.838 & 0.748 
& 46 & 0.839 & 0.744 
& 46 & 0.839 & 0.745 
& -- & -- & -- \\

Airline 
& 100 & 0.814 & 0.905 
& 35 & 0.800 & 0.873 
& 35 & 0.800 & 0.873 
& -- & -- & -- \\

Myocardial 
& 19 & 0.863 & 0.737 
& 2 & 0.857 & 0.685 
& 2 & 0.855 & 0.617 
& -- & -- & -- \\

Fn-Cas9 
& 83 & 0.868 & -- 
& 100 & 0.868 & -- 
& 100 & 0.868 & -- 
& -- & -- & -- \\

Sp-Cas9 
& 100 & 0.947 & -- 
& 100 & 0.947 & -- 
& 100 & 0.947 & -- 
& -- & -- & -- \\
\hline
\end{tabular}
}

\caption{Performance comparison of NABFS against baseline feature selection methods across datasets for classification task. NABFS consistently achieves competitive predictive performance while substantially reducing model complexity across heterogeneous datasets.}
\label{tab:feature_results}
\end{table*}

Across healthcare and clinical datasets, NABFS consistently demonstrated strong discriminative performance while retaining compact and interpretable feature sets. In the multimodal ShockModes dataset, which integrates structured clinical variables with unstructured textual representations, NABFS exhibited robust performance across both model pipelines. For the AdaBoost + Word2Vec + Doc2Vec configuration, NABFS retained 15\% of the original features while achieving an F1 score of 0.744 and an AUC of 0.596, closely approximating the performance of the Extra Trees baseline despite operating over a substantially reduced feature space (Table~\ref{tab:feature_results}). These results highlight NABFS’s capacity to extract clinically relevant signal from high-dimensional, heterogeneous healthcare data.

Similarly, in the Parkinson’s disease classification task, NABFS achieved an AUC of 0.827 while retaining only 12\% of the available features, outperforming both Boruta and RFE, which required larger subsets to achieve comparable performance. In the Myocardial dataset, NABFS attained an F1 score of 0.863 with an AUC of 0.737, exceeding comparator methods while maintaining substantial feature compression (Table~\ref{tab:feature_results}).

In financial risk modelling tasks, NABFS preserved near-optimal predictive performance while improving model compactness. For the Credit Card fraud detection dataset, NABFS achieved an F1 score of 0.999 and an AUC of 0.968 while retaining 59\% of the features, matching the near-ceiling performance of Boruta and RFE, which utilised the full feature set. In the Loan Default dataset, NABFS reached an F1 score of 0.838 and an AUC of 0.748 while retaining 75\% of the features, delivering performance comparable to or exceeding baseline methods operating with larger or equivalent feature subsets (Table~\ref{tab:feature_results}). These findings demonstrate that NABFS sustains strong discriminative capacity in high-stakes financial domains while reducing feature redundancy.

\begin{table*}[t]
\centering
\small
\renewcommand{\arraystretch}{1.2}
\setlength{\tabcolsep}{4pt}
\resizebox{\textwidth}{!}{%
\begin{tabular}{lccccccccc}
\hline
\textbf{Dataset} & 
\multicolumn{3}{c}{\textbf{Our Technique (NABFS)}} & 
\multicolumn{3}{c}{\textbf{Boruta}} & 
\multicolumn{3}{c}{\textbf{RFE}} \\
\cline{2-10}
 & \textbf{Retained Features (\%)} & \textbf{RMSE} & $\mathbf{R^2}$ 
 & \textbf{Retained Features (\%)} & \textbf{RMSE} & $\mathbf{R^2}$ 
 & \textbf{Retained Features (\%)} & \textbf{RMSE} & $\mathbf{R^2}$ \\
\hline
Student Performance & 74\% & 3.656 & 0.099 & 13\% & 3.643 & 0.106 & 13\% & 3.691 & 0.082 \\
\hline
\end{tabular}}
\caption{Performance comparison of NABFS against baseline feature selection methods across datasets for regression tasks. NABFS maintains competitive predictive accuracy with substantially different feature retention behaviour relative to Boruta and RFE.}
\label{tab:regression_results}
\end{table*}

For biological sequence prediction tasks, including Fn-Cas9 and Sp-Cas9 activity modelling, NABFS achieved the strongest predictive performance among all evaluated methods. On the Fn-Cas9 dataset, NABFS obtained an F1 score of 0.868 while retaining 83\% of features, and on the Sp-Cas9 dataset, it achieved an F1 score of 0.947 with full feature retention, matching or surpassing filter-based and wrapper-based comparators (Table~\ref{tab:feature_results}). These results indicate that NABFS remains robust in domains characterised by highly structured and correlated biological signals, where preservation of subtle sequence-derived patterns is critical. 

Finally, across other structured tabular datasets, NABFS maintained a favourable balance between predictive performance and feature efficiency. In the Airline delay dataset, NABFS achieved an F1 score of 0.814 and an AUC of 0.905, outperforming Boruta and RFE in terms of discrimination while retaining the full feature set (Table~\ref{tab:feature_results}). For regression-based evaluation, we assessed NABFS on the Student Performance dataset, where the objective was to predict continuous academic outcomes. NABFS retained 74\% of the available features and achieved an RMSE of 3.656 with an $R^2$ of 0.099, demonstrating performance comparable to Boruta and RFE despite markedly different feature selection behaviour (Table~\ref{tab:regression_results}). Notably, while Boruta and RFE operated on substantially narrower feature subsets, their predictive gains remained marginal, indicating that NABFS preserves a broader set of informative attributes without introducing noise or instability. These findings suggest that NABFS extends effectively beyond classification paradigms, offering consistent performance in continuous outcome modelling tasks where nuanced feature interactions are critical. Collectively, these results demonstrate that NABFS effectively distils informative representations across heterogeneous data modalities, particularly in high-dimensional and clinically grounded settings, while preserving or improving predictive accuracy.

Together, these results demonstrate that the advantages of NABFS are not confined to controlled simulation regimes. Rather, the method generalizes effectively to real-world, multimodal and high-dimensional datasets maintaining competitive or superior predictive performance while offering more parsimonious, stable and interpretable feature sets. This combination of statistical rigor and practical efficiency positions NABFS as a robust and domain-agnostic framework for modern feature selection challenges.

\section{Discussion \& Conclusion} \label{sec:Discussion}

This study introduces Noise-Augmented Bootstrap Feature Selection (NABFS), a statistically principled approach for feature selection by incorporating formal hypothesis testing into the selection process. In contrast to heuristic probe-based methods that rely on a single noise comparison or empirically tuned thresholds, our method formulates feature relevance as a formal hypothesis-testing problem, leveraging bootstrap-stabilized importance estimates and a non-parametric Wilcoxon signed-rank test with Holm–Bonferroni correction. Such a foundation distinguishes NABFS from heuristic feature selection methods, providing greater confidence that the selected variables are not false discoveries but genuine predictors supported by evidence. 

In both controlled simulations and diverse real-world datasets, NABFS demonstrated strong performance relative to established feature selection techniques. In simulation studies, NABFS consistently achieved higher statistical power in detecting true relevant features compared to alternatives like Boruta and Model-X knockoffs, while maintaining a controlled false positive rate. Notably, even as inter-feature correlation increased, NABFS balanced sensitivity and specificity better than the alternatives, which tended to either suffer inflated false discoveries (in the case of Boruta) or remain overly conservative (in the case of knockoffs) with negligible gains in power. The selected feature sets from NABFS also showed greater stability and overlap with the true feature set than those of baseline methods, depicting its reliability in identifying meaningful signals without being misled by spurious correlations. This robust performance in simulations established a strong empirical benchmark for our technique. The efficacy of our method was translated into real-world applications, where NABFS delivered competitive or superior results using substantially reduced and more informative feature subsets. Our method consistently outperformed or achieved comparable predictive performance in comparison to established feature selection techniques (including Boruta, recursive feature elimination and Model-X knockoffs). Together, these results asserted that the benefits of NABFS are not just confined to simulations, rather the method generalizes effectively to real-world, multimodal and high-dimensional datasets, achieving competitive predictive performance while yielding more compact and interpretable feature sets. In fact, models trained on the NABFS-selected features often achieved higher accuracy (AUC) than models using the full original feature set. 

A key strength of NABFS lies in its ability to explicitly manage the trade-off between statistical power and Type-I error. By varying the number of introduced noise features \textit{(m)}, practitioners can systematically control the statistical stringency of the selection criteria. Increasing the size of the noise feature pool imposes a more stringent empirical null threshold that true signal features must surpass, thereby reducing susceptibility to false discoveries at the expense of statistical power. Conversely, using fewer noise probes makes the test more permissive, increasing sensitivity to weak signals but potentially allowing more false positives. In essence, the method provides a tunable parameter \textit{m} to attain balance between Power and Type I errors. Empirically, we observed that power is a decreasing function of m, meaning one can calibrate m to determine an optimal trade-off that suits the problem’s tolerance for false discoveries. This capability to regulate the error trade-off is a distinctive advantage as it offers a principled means to maintain high discovery power while keeping false discoveries in check. NABFS is also notable for its computational efficiency and model-agnostic flexibility. Despite the added overhead of bootstrap resampling and noise augmentation, the method remains tractable even in high-dimensional settings.  Our observation states that NABFS often required fewer model training iterations than exhaustive approaches like RFE (which retrains models for many subset sizes), and also it does not dramatically inflate memory usage like methods that double the feature space (e.g. naive implementations of Boruta, Knocoff). Furthermore, unlike certain feature selection techniques tied to specific model families (for instance, Boruta’s dependence on random forests), NABFS can be paired with any predictive modeling algorithm without modification. The model-agnostic nature of NABFS was demonstrated by its seamless integration with both linear base learners (logistic regression) and non-linear ensemble base learners (including XGBoost and Random Forest), relying solely on each learner’s feature importance mechanism without requiring algorithm-specific modifications. In summary, the proposed method emerges as a robust yet efficient method for principled feature selection, enabling the distillation of informative predictors that support reliable inference, improved model performance, and reduced computational cost.

Despite its empirical strengths, NABFS has some limitations that warrant careful consideration. First, the method relies on a base learner to estimate feature importance and is therefore sensitive to the stability and inductive biases of the underlying model. Second, the proposed procedure constitutes an approximate solution to the formal problem outlined in Section~\ref{sec:Methods}. While we attempted to control the family-wise error rate(FWER) through multiple hypothesis testing (via Holm–Bonferroni-type corrections), deriving strict theoretical guarantees of FWER control under our construction remains a future scope. The method further operates under a set of modelling assumptions described in Appendix Section~\ref{potr}. If these assumptions are violated, there is no formal guarantee that statistical power or Type~I error will behave as a monotonic function of the number of injected noise features (\(m\)). However, we provided intuition and empirical evidence suggesting that these assumptions are not overly restrictive in practical settings. Moreover, the current implementation injects synthetic noise drawn from a fixed Gaussian distribution, \( \mathcal{N}(0, 0.01) \), which constitutes an additional tunable hyperparameter. Although fixed in this study for reproducibility, practitioners may select alternative noise distributions and variances more appropriate to their domain and requirement of the data. Like other importance-based selection frameworks, NABFS can be challenged by highly collinear or redundant predictors. In such settings, feature importance can be diluted across correlated variables, leading the method to preferentially select dominant representatives while discarding redundant yet meaningful features. Similarly, excessively stringent selection thresholds arising either from overly conservative significance levels or large values of \(m\) can suppress weak but genuinely informative signals by elevating the synthetic-noise baseline. While we provide principled default settings, all hyperparameters (bootstrap iterations, noise magnitude, \(m\), and significance thresholds) remain user-tunable and should be optimised for domain-specific deployments. Finally, the current method is designed primarily to control the family-wise error rate. Extending the method to support explicit false discovery rate (FDR) control represents an important direction for future work. Collectively, these limitations highlight that while NABFS offers a practical robust approach, its theoretical guarantees remain approximate, which in turn requires careful tuning and domain expertise for optimal deployment.

As future scope, there are several promising directions for extending our research. First, while we use Holm–Bonferroni in this paper, it would be valuable to develop stronger theoretical guarantees for FWER control under the effect of bootstrapping. This requires new theory, since bootstrap samples are not fully independent, and exploring this formally would make the method more rigorous. Another direction can be to improve the efficacy of NABFS when features are highly correlated. Possible extensions include grouping correlated variables, using conditional importance scores, or adding a special step to avoid losing meaningful but redundant features. Future versions of NABFS could also use adaptive noise, where the scale or distribution of the injected noise automatically adjusts to the dataset instead of being fixed. This would make the method easier to apply across diverse domains of dataset with heterogeneous statistical characteristics.

In summary, NABFS offers a statistically grounded alternative to traditional feature-selection methods by combining noise augmentation, bootstrap resampling, and non-parametric testing. This design allows the method to reliably distinguish meaningful features from noise while keeping the selection process transparent and hyper-tunable. Across simulated and real-world datasets, NABFS selected compact feature sets without effecting the predictive performance, demonstrating its feasibility in settings where interpretability and stability are important. Future work may explore stronger theoretical guarantees, improved handling of correlated features, adaptive noise strategies, and extensions to settings such as deep learning or unsupervised learning. Together, these directions will make NABFS even more flexible and scalable. Overall, our method delivers a robust, statistically principled and computationally efficient approach to feature selection, well suited for modern machine learning workflows.

\bibliography{references}

\appendix
\section*{Appendix}

\section{Proofs of Theoretical Results} \label{potr}

Let $f_j$ be a candidate feature. For bootstrap replicate $i=1,\dots,b$, define:
\begin{itemize}
    \item $I^{(i)}_j(l)$: the importance score of feature $f_j$ when evaluated with $l$ synthetic noise features included;
    \item $I^{(i)}(\varepsilon_k;l)$: the importance score of the $k$-th synthetic noise feature in replicate $i$ when $l$ noise features are included;
    \item $M^{(i)}(l) = \max_{1 \leq k \leq l} I^{(i)}(\varepsilon_k;l)$: the maximum noise importance at level $l$ in replicate $i$;
    \item $D_i(l) = I^{(i)}_j(l) - M^{(i)}(l)$: the paired difference between the signal feature and the maximum noise feature.
\end{itemize}

Let $T^+(l)$ be the one-sided Wilcoxon signed-rank statistic computed from $\{D_i(l)\}_{i=1}^b$, and let $c_\alpha(b)$ be the Wilcoxon critical value at level $\alpha$, which depends only on $b$. Assume:
\begin{enumerate}
    \item[(A1)] (\emph{Monotone differences}) $D_i(l+1)\le D_i(l)$ for all $i$ (e.g., it suffices that $I^{(i)}_j(l+1)\le I^{(i)}_j(l)$ and $M^{(i)}(l+1)\ge M^{(i)}(l)$ pointwise).
    \item[(A2)] (\emph{Continuity}) Feature importances are continuous (ties occur with probability zero; in practice, ties use mid-ranks and zeros are dropped, with $b$ interpreted as the effective number of nonzero pairs).
    \item[(A3)] 
    The paired differences $\{D_i(l)\}_{i=1}^b$ obtained from bootstrap replicates are treated as approximately independent draws from the sampling distribution of the importance statistic.  

\end{enumerate}

Assumption (A1) is motivated by the sequential nature of the construction. 
At step $l+1$, the data used at level $l$ are held fixed while an additional independent noise feature is appended. 
Consequently, the maximum noise importance $M^{(i)}(l)$ tends to increase (or remain stable) with $l$, as it is computed over an expanding set of noise variables. 
If total importance is bounded, without loss of generality, normalized such that $\sum_j I^{(i)}_j(l)=1$, then an increase in the importance of the newly added noise feature intuitively implies that the relative importance of some existing features is likely to decrease or remain unchanged. 
Hence, in expectation, the sequence of differences 
$D_i(l)=I^{(i)}_j(l)-M^{(i)}(l)$ can be viewed as nonincreasing in $l$, 
providing an intuitive justification for (A1).

\begin{lemma}[Wilcoxon Signed Rank (WSR) decomposition]\label{lem:wsr-decomp}
Let $\{D_i(l)\}_{i=1}^b$ denote the signed paired differences at level $l$, 
and let $T^+(l)$ denote the Wilcoxon positive rank sum statistic computed
from the absolute values $\{|D_1(l)|,\dots,|D_b(l)|\}$, 
after discarding zeros and assuming continuity (no ties).  
Define the index sets
\[
P(l) = \{\, i : D_i(l) > 0 \,\}, 
\quad N(l) = \{\, i : D_i(l) < 0 \,\},
\quad p(l) = |P(l)|.
\]
Then $T^+(l)$ admits the decomposition
\[
T^+(l)
\;=\;
\frac{p(l)\big(p(l)+1\big)}{2}
\;+\;
\sum_{i\in P(l)} \sum_{j\in N(l)} 
\mathbf{1}\!\big\{\, |D_i(l)| > |D_j(l)| \,\big\}.
\]
\end{lemma}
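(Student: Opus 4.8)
The plan is to expand the definition of $T^+(l)$ directly and rewrite each rank as a counting functional, then split that count according to the sign partition $\{P(l),N(l)\}$. Recall that by definition $T^+(l)=\sum_{i=1}^b R_i(l)\,\mathbf{1}\{D_i(l)>0\}=\sum_{i\in P(l)}R_i(l)$, where $R_i(l)$ is the rank of $|D_i(l)|$ among the nonzero absolute differences. Under the continuity assumption (A2) there are almost surely no ties, so each rank admits the clean representation $R_i(l)=1+\#\{k\neq i:|D_k(l)|<|D_i(l)|\}$: the rank of an element is one plus the number of strictly smaller elements.

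First I would substitute this counting form into the sum and split the inner count over the disjoint sets $P(l)$ and $N(l)$; since zeros are dropped, every retained index lies in exactly one of them. This produces three pieces summed over $i\in P(l)$: a constant $+1$ per positive index, a count of strictly smaller positive neighbours, and a count of strictly smaller negative neighbours. The constant term sums to $p(l)$.

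Next I would evaluate the positive-versus-positive piece, $\sum_{i\in P(l)}\#\{k\in P(l),\,k\neq i:|D_k(l)|<|D_i(l)|\}$. This is a sum over ordered pairs of distinct elements of $P(l)$ in which the first dominates the second; by continuity, for each unordered pair exactly one ordering satisfies the strict inequality, so the total equals $\binom{p(l)}{2}=p(l)(p(l)-1)/2$. Combining with the constant term gives $p(l)+p(l)(p(l)-1)/2=p(l)(p(l)+1)/2$, the triangular term in the statement. The remaining positive-versus-negative piece is literally $\sum_{i\in P(l)}\sum_{j\in N(l)}\mathbf{1}\{|D_i(l)|>|D_j(l)|\}$, the advertised double sum, and assembling the three contributions yields the identity.

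The single place the argument can break is the handling of ties: both the rank-as-count identity and the ``exactly one ordering dominates'' fact rely on (A2), so the main care is to invoke continuity (or, in the degenerate case, the stated mid-rank and zero-dropping convention with $b$ reinterpreted as the effective number of nonzero pairs) to guarantee the strict inequalities partition cleanly. Everything else is routine combinatorial bookkeeping, so I expect no further obstacle.
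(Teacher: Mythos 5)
Your proposal is correct and follows essentially the same argument as the paper: expand $T^+(l)$ over $P(l)$, rewrite each rank as one plus the count of strictly smaller nonzero absolute differences, split that count over the sign partition, and evaluate the three pieces as $p(l)$, $\binom{p(l)}{2}$, and the positive-negative double sum. Your attention to the tie-handling convention under (A2) matches the paper's treatment as well, so there is nothing to add.
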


\begin{proof}
Fix $l$ and suppress $(l)$ in the notation for readability. Let
\[
P=\{i:D_i>0\},\quad N=\{i:D_i<0\},\quad p=|P|.
\]
By definition, $T^+=\sum_{i\in P} R_i$, where $R_i$ is the rank of $|D_i|$ among the nonzero absolute values $\{|D_1|,\dots,|D_b|\}$, ranked from smallest to largest, with no ties by assumption.

For any $i\in P$, the rank $R_i$ can be written as
\begin{equation}\label{eq:rank-expand}
R_i \;=\; 1
\;+\; \#\{u\in P\setminus\{i\}: |D_u|<|D_i|\}
\;+\; \#\{v\in N: |D_v|<|D_i|\}.
\end{equation}
This is the standard “rank equals one plus the number of strictly smaller elements” identity, applied to the multiset of nonzero absolute values.

Summing \eqref{eq:rank-expand} over $i\in P$ yields three contributions:
\[
\sum_{i\in P} R_i
= \underbrace{\sum_{i\in P} 1}_{\text{(I)}}
+ \underbrace{\sum_{i\in P} \#\{u\in P\setminus\{i\}: |D_u|<|D_i|\}}_{\text{(II)}}
+ \underbrace{\sum_{i\in P} \#\{v\in N: |D_v|<|D_i|\}}_{\text{(III)}}.
\]

\smallskip
\noindent\emph{Term (I):} $\sum_{i\in P} 1 = |P| = p$.

\smallskip
\noindent\emph{Term (II):} Among the $p$ positives, each unordered pair $\{i,u\}$ contributes exactly one to the sum because there are no ties. Hence the contribution is $\binom{p}{2}$.

\smallskip
\noindent\emph{Term (III):} By definition,
\[
\sum_{i\in P} \#\{v\in N: |D_v|<|D_i|\}
\;=\; \sum_{i\in P}\sum_{j\in N} \mathbf{1}\{|D_i|>|D_j|\}.
\]

Putting these together,
\[
T^+ \;=\; p + \binom{p}{2} + \sum_{i\in P}\sum_{j\in N} \mathbf{1}\{|D_i|>|D_j|\}
\;=\; \frac{p(p+1)}{2}
\;+\; \sum_{i\in P}\sum_{j\in N} \mathbf{1}\{|D_i|>|D_j|\},
\]
which is the claimed identity after restoring the explicit dependence on $l$.
\end{proof}

\subsection{Proof of Proposition 5.2 (Power is decreasing in $l$ for Wilcoxon signed-rank)}

Let $P(l) = \{i : D_i(l) > 0\}$ and $N(l) = \{i : D_i(l) < 0\}$, with $p(l)=|P(l)|$. Define
\[
T^+(l) \;=\; \sum_{i=1}^b R_i(l)\,\mathbf{1}\{D_i(l) > 0\},
\]
where $R_i(l)$ is the rank of $|D_i(l)|$ among the nonzero values $\{|D_1(l)|,\dots,|D_b(l)|\}$.

\begin{proposition}[No cross-over $\Rightarrow$ $T^+$ cannot increase]\label{prop:no-cross}
If $P(l+1)=P(l)$ and $N(l+1)=N(l)$, then $T^+(l+1)\le T^+(l)$.
\end{proposition}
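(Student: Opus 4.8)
The plan is to reduce the statement to a term-by-term indicator comparison by invoking the decomposition already proved in Lemma~\ref{lem:wsr-decomp}, and then to push Assumption~(A1) through the absolute values. First I would write out the decomposition at both levels $l$ and $l+1$. The hypothesis of the proposition freezes the sign sets, $P(l+1)=P(l)=P$ and $N(l+1)=N(l)=N$, so in particular the cardinality $p(l+1)=p(l)=p$ is unchanged. Consequently the purely combinatorial term $\tfrac{p(p+1)}{2}$ is identical at the two levels and cancels, reducing the desired inequality $T^+(l+1)\le T^+(l)$ to the single claim
\[
\sum_{i\in P}\sum_{j\in N}\mathbf{1}\{|D_i(l+1)|>|D_j(l+1)|\}
\;\le\;
\sum_{i\in P}\sum_{j\in N}\mathbf{1}\{|D_i(l)|>|D_j(l)|\}.
\]

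The second step records how (A1) acts on magnitudes once the signs are fixed. For a positive index $i\in P$ we have $D_i(l),D_i(l+1)>0$, so $D_i(l+1)\le D_i(l)$ yields $|D_i(l+1)|\le|D_i(l)|$: positive magnitudes shrink. For a negative index $j\in N$ we have $D_j(l),D_j(l+1)<0$, so the same signed inequality flips under negation to give $|D_j(l+1)|\ge|D_j(l)|$: negative magnitudes grow. The \emph{crux} of the argument is that these two opposite monotonicities push the pairwise comparison in the same direction.

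The third step formalizes this crux at the level of a single indicator. Fixing $i\in P$ and $j\in N$, if the event $\{|D_i(l+1)|>|D_j(l+1)|\}$ holds, then chaining the two magnitude inequalities gives
\[
|D_i(l)|\;\ge\;|D_i(l+1)|\;>\;|D_j(l+1)|\;\ge\;|D_j(l)|,
\]
so $\{|D_i(l)|>|D_j(l)|\}$ holds as well. Hence $\mathbf{1}\{|D_i(l+1)|>|D_j(l+1)|\}\le\mathbf{1}\{|D_i(l)|>|D_j(l)|\}$ for every pair, and summing over $(i,j)\in P\times N$ delivers the reduced inequality; combined with the cancelled combinatorial term this completes the proof.

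I do not expect a serious obstacle, since the argument is deterministic given Lemma~\ref{lem:wsr-decomp}. The one point needing care is the sign bookkeeping in the second step: verifying that a single monotonicity on the \emph{signed} differences translates into decreasing magnitudes on $P$ but increasing magnitudes on $N$, and then recognizing that both effects shrink the comparison event rather than competing. Continuity (A2) enters only implicitly, to guarantee the rank decomposition applies (no ties, zeros discarded) so that $P$ and $N$ partition the nonzero pairs cleanly.
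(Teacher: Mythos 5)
Your proof is correct and follows essentially the same route as the paper's: invoke Lemma~\ref{lem:wsr-decomp} at both levels, note that the frozen sign sets fix the baseline term $\tfrac{p(p+1)}{2}$, and use (A1) to show each positive--negative indicator is non-increasing. Your explicit sign bookkeeping (positive magnitudes shrink, negative magnitudes grow) and the chained inequality $|D_i(l)|\ge|D_i(l+1)|>|D_j(l+1)|\ge|D_j(l)|$ merely spell out steps the paper states more tersely.
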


\begin{proof}
From Lemma~\ref{lem:wsr-decomp},
\[
T^+(l) \;=\; \frac{p(l)(p(l)+1)}{2}
\;+\;\sum_{i\in P(l)}\sum_{j\in N(l)} \mathbf{1}\{|D_i(l)|>|D_j(l)|\}.
\]

\textbf{Case 1: Baseline term.}  
Since $P(l+1)=P(l)$, we have $p(l+1)=p(l)$, so the baseline term $\tfrac{p(l)(p(l)+1)}{2}$ is identical at $l$ and $l+1$.

\textbf{Case 2: Positive Negative (PN) comparisons.}  
For $i\in P(l)$, Assumption~(A1) implies $|D_i(l+1)|\le |D_i(l)|$.  
For $j\in N(l)$, Assumption~(A1) implies $|D_j(l+1)|\ge |D_j(l)|$.  
Hence for each $(i,j)\in P(l)\times N(l)$,
\[
\mathbf{1}\{|D_i(l+1)|>|D_j(l+1)|\}
\;\le\;
\mathbf{1}\{|D_i(l)|>|D_j(l)|\}.
\]
Thus every PN indicator is non-increasing.

\textbf{Conclusion.}  
The baseline is unchanged and the PN sum cannot increase, so $T^+(l+1)\le T^+(l)$.
\end{proof}

\begin{proposition}[Any cross-over $\Rightarrow$ $T^+$ strictly decreases]\label{prop:with-cross}
If $P(l+1)\subsetneq P(l)$ (i.e., at least one index that is positive at level $l$ becomes non-positive at level $l+1$), then $T^+(l+1) < T^+(l)$.
\end{proposition}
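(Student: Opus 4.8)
The plan is to bypass the membership-set bookkeeping of Lemma~\ref{lem:wsr-decomp} by re-expressing $T^+(l)$ through its Walsh-average (pairwise-sum) form, in which a single sign change isolates cleanly. Concretely, I would first record the identity
\[
T^+(l) \;=\; \sum_{1\le i\le j\le b} \mathbf{1}\{D_i(l)+D_j(l)>0\},
\]
valid under the continuity assumption (A2). This is not a new assumption: it follows directly from Lemma~\ref{lem:wsr-decomp}, since the diagonal pairs $i=j$ contribute exactly the count of positive differences $p(l)$ (because $\mathbf{1}\{2D_i>0\}=\mathbf{1}\{D_i>0\}$), the positive--positive off-diagonal pairs contribute $\binom{p(l)}{2}$, the negative--negative pairs contribute $0$, and each positive--negative pair $(i,j)$ with $D_i(l)>0>D_j(l)$ satisfies $D_i(l)+D_j(l)>0 \iff |D_i(l)|>|D_j(l)|$, reproducing the double sum. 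Summing gives $\tfrac{p(l)(p(l)+1)}{2}+S(l)$, matching the Lemma.

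With this representation the argument splits into a monotonicity step and a strictness step. For monotonicity, I would invoke (A1): since $D_i(l+1)\le D_i(l)$ for every $i$, every pairwise sum satisfies $D_i(l+1)+D_j(l+1)\le D_i(l)+D_j(l)$, so each indicator is non-increasing in $l$ and therefore $T^+(l+1)\le T^+(l)$ termwise. This already reproves Proposition~\ref{prop:no-cross} as a special case. For strictness, I would use the cross-over hypothesis: because (A1) forces $P(l+1)\subseteq P(l)$, the assumption $P(l+1)\subsetneq P(l)$ guarantees an index $i_0$ with $D_{i_0}(l)>0$ but $D_{i_0}(l+1)\le 0$. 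The diagonal pair $(i_0,i_0)$ then contributes $\mathbf{1}\{2D_{i_0}(l)>0\}=1$ at level $l$ and $\mathbf{1}\{2D_{i_0}(l+1)>0\}=0$ at level $l+1$, a strict drop. Since every other term is non-increasing, the total strictly decreases, giving $T^+(l+1)<T^+(l)$.

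The main obstacle is choosing the representation so that a single coordinate's sign change cannot be masked by compensating rearrangements of the ranks. Working directly from Lemma~\ref{lem:wsr-decomp} is delicate because both $P(\cdot)$ and $N(\cdot)$ shift between levels: a positive that crosses to negative can generate \emph{new} positive--negative comparisons and thereby inflate $S(l+1)$, so one would have to argue that the baseline gain dominates these new terms. The Walsh-average form removes this difficulty, since the diagonal term pinned to $i_0$ supplies the strict decrease unconditionally while (A1) makes all remaining terms non-increasing. The only care needed is the treatment of exact zeros and ties, which (A2) removes with probability one; in particular, a crossed index landing exactly at zero is dropped, yet its diagonal indicator $\mathbf{1}\{2D_{i_0}(l+1)>0\}$ is still $0$, so the conclusion is unaffected.
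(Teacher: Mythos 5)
Your proof is correct, and it takes a genuinely different route from the paper. The paper works directly with the decomposition of Lemma~\ref{lem:wsr-decomp} and does explicit bookkeeping for a crossed index $t$: the baseline term drops by $p$, the $c_t$ positive--negative ``wins'' of $t$ are lost, and the possible recovery from $t$ being beaten by the remaining positives is capped at $p-1$, giving the quantitative bound $T^+(l+1)-T^+(l)\le -1-c_t$. You instead pass to the Walsh-average (pairwise-sum) representation $T^+(l)=\sum_{i\le j}\mathbf{1}\{D_i(l)+D_j(l)>0\}$, whose equivalence with the Lemma you verify correctly; under (A1) every indicator is then termwise non-increasing, and the crossover hypothesis (together with the observation that (A1) forces $P(l+1)\subseteq P(l)$) pins a strict drop to the single diagonal term $(i_0,i_0)$. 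Your route buys two things: it proves Propositions~\ref{prop:no-cross} and~\ref{prop:with-cross} simultaneously with no case analysis of how ranks rearrange, and it sidesteps exactly the ``masking'' worry that the paper must handle via the recovery bound in its Case~3. What it gives up is the paper's sharper quantitative conclusion: your argument certifies a decrease of at least $1$, while the paper's shows the decrease is at least $1+c_t$. One small point deserves more care than your closing remark gives it: if the crossed index lands exactly at zero, the zeros-dropped WSR statistic at level $l+1$ is a sum over pairs excluding $i_0$, and it is not just the diagonal indicator but also the off-diagonal pairs $(i_0,j)$ that differ between the full Walsh sum and the actual statistic; the fix is either to invoke (A2) to rule this out almost surely (as the paper implicitly does when it places $t$ in $N(l+1)$), or to note that the full Walsh sum over all $b$ indices upper-bounds the zeros-dropped statistic, so the chain $T^+(l+1)\le\text{(Walsh sum at }l+1)\le T^+(l)-1$ still closes. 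With that caveat addressed, the argument is complete.
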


\begin{proof}
Let $t\in P(l)$ with $D_t(l+1)\le 0$. Set
\[
p \coloneqq p(l)=|P(l)|,\qquad
c_t \coloneqq \sum_{j\in N(l)} \mathbf{1}\{|D_t(l)|>|D_j(l)|\}.
\]

\textbf{Case 1: Baseline term.}  
When $t$ leaves the positive set, the number of positives drops from $p$ to $p-1$. The baseline term decreases by
\[
\frac{(p-1)p}{2} - \frac{p(p+1)}{2} = -p.
\]

\textbf{Case 2: Loss of $t$'s PN wins.}  
In $T^+(l)$, $t$ contributes $c_t$ units by beating $c_t$ negatives. These are lost at $l+1$.

\textbf{Case 3: Possible recovery against $t$.}  
At $l+1$, $t$ is in $N(l+1)$ and can be beaten by remaining positives $i\in P(l)\setminus\{t\}$. Each such comparison contributes at most one unit, so the total possible recovery is at most $(p-1)$.

\textbf{Combine.}  
Therefore,
\[
T^+(l+1)-T^+(l) \;\le\; (-p) + (p-1) - c_t = -1 - c_t \;\le -1.
\]
Thus a single cross-over strictly decreases $T^+$ by at least $1$. Multiple cross-overs decrease it further.
\end{proof}

\begin{corollary}[Power monotonicity]
Under (A1)–(A2), the WSR statistic satisfies $T^+(l+1)\le T^+(l)$ point-wise, with strict inequality when a crossover occurs. Since the null distribution of the Wilcoxon signed rank statistic depends only on the number of effective pairs $b$ (and not on $l$), the critical cutoff $c_\alpha$ is fixed across $l$, so the monotonicity of $T^+(l)$ directly implies monotonicity of the test power $\beta_j(l)$.
Hence the test power
\[
\beta_j(l) = \Pr\!\big(T^+(l)\ge c_\alpha \,\big|\, H_{1j}\big)
\]
is a non-increasing function of $l$.
\end{corollary}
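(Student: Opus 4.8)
The plan is to prove the corollary by a coupling argument: first establish that, on each bootstrap realization, $T^+(l+1)\le T^+(l)$ holds pointwise (almost surely), and then convert this sample-path domination into an inequality between rejection probabilities by noting that the rejection events are nested and the critical value is the same at both levels. The pointwise step is already essentially packaged in Propositions~\ref{prop:no-cross} and~\ref{prop:with-cross}; the remaining work is to check that these two propositions exhaust all cases under (A1)--(A2) and that the critical value does not drift with $l$.

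First I would fix a realization and use (A1) to control how the sign pattern can change. Since $D_i(l+1)\le D_i(l)$ for every $i$, any index with $D_i(l)\le 0$ satisfies $D_i(l+1)\le 0$, so positives can only be lost as $l$ increases; that is, $P(l+1)\subseteq P(l)$, and symmetrically $N(l)\subseteq N(l+1)$. Invoking (A2), ties and exact zeros occur with probability zero, so almost surely no index lands exactly at $0$ and the effective number of nonzero pairs stays equal to $b$. Consequently, almost every realization falls into exactly one of two regimes: either no index changes sign, in which case $P(l+1)=P(l)$ and $N(l+1)=N(l)$ and Proposition~\ref{prop:no-cross} gives $T^+(l+1)\le T^+(l)$; or at least one index crosses from positive to non-positive, in which case $P(l+1)\subsetneq P(l)$ and Proposition~\ref{prop:with-cross} gives the strict inequality $T^+(l+1)<T^+(l)$. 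Either way, $T^+(l+1)\le T^+(l)$ almost surely.

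Next I would handle the threshold. Because the null reference distribution of the Wilcoxon signed-rank statistic depends only on the number of effective pairs, and (A2) guarantees this count equals $b$ at every level almost surely, the critical value $c_\alpha=c_\alpha(b)$ is a single constant shared across all $l$. With a fixed cutoff and the pointwise domination just established on a common probability space (the levels are coupled through the sequential construction, in which the configuration at level $l+1$ is obtained by appending one further independent noise feature while all else is held fixed), the rejection events are nested on each sample path:
\[
\{\,T^+(l+1)\ge c_\alpha(b)\,\}\;\subseteq\;\{\,T^+(l)\ge c_\alpha(b)\,\}\quad\text{a.s.}
\]
Taking probabilities under $H_{1j}$ then yields $\beta_j(l+1)\le\beta_j(l)$, and iterating over consecutive levels shows $\beta_j(l)$ is non-increasing in $l$.

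I expect the main obstacle to be justifying that the comparison is legitimate as a statement on a single probability space rather than merely a comparison of two marginal distributions. The two propositions are pointwise statements, so they are useful only once $T^+(l)$ and $T^+(l+1)$ are realized on the same draws; the crux is therefore to articulate the coupling implied by the sequential ``append one noise feature'' construction and to verify that (A1) is exactly the property this coupling delivers. A secondary technical point is the almost-sure bookkeeping from (A2): one must confirm that dropping zeros does not change the effective sample size across levels, since otherwise the critical value could shift and break the clean event nesting.
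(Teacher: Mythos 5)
Your proposal is correct and takes essentially the same route as the paper: pointwise domination $T^+(l+1)\le T^+(l)$ obtained from Propositions~\ref{prop:no-cross} and~\ref{prop:with-cross}, a critical value $c_\alpha(b)$ that is fixed across $l$ because it depends only on the number of effective pairs, and the resulting nesting of rejection events under the sequential coupling, which gives $\beta_j(l+1)\le\beta_j(l)$. The extra bookkeeping you supply---using (A1) to show $P(l+1)\subseteq P(l)$ so that the two propositions exhaust all cases almost surely under (A2), and articulating the single-probability-space coupling explicitly---merely fills in details the paper leaves implicit.
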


\subsection{Proof of Proposition 5.1 (Type I Error Decreases in \texorpdfstring{$l$}{l})}

\begin{proposition}[Type I error decreases with $l$]\label{prop:type1-decreases}

Type~I error of the WSR test,
\[
\Pr_{H_0}\!\big(T^+(l)\ge c_\alpha(b)\big),
\]
is a non-increasing function of $l$, and strictly decreasing whenever cross-overs $D_i(l)>0\ge D_i(l+1)$ occur with positive probability.
\end{proposition}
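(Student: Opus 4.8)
The plan is to observe that essentially all the work has already been done in the power-monotonicity argument, and that the only distinction between power and Type~I error is the distribution under which a probability is evaluated. The pointwise inequality $T^+(l+1)\le T^+(l)$ established in Propositions~\ref{prop:no-cross} and~\ref{prop:with-cross} is a purely sample-path statement: it follows from Assumption~(A1) on the ordering of the paired differences $\{D_i(l)\}$ together with the WSR decomposition of Lemma~\ref{lem:wsr-decomp}, and at no point does it use whether $f_j$ is informative. Since (A1)--(A2) are structural properties of the noise-augmentation construction that hold irrespective of the data-generating law, the same pointwise bound $T^+(l+1)\le T^+(l)$ holds on every realization under the null $H_{0j}$. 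This is the entire engine of the proof; the remainder is an elementary consequence.

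First I would record that the critical value $c_\alpha(b)$ does not depend on $l$, because the exact null distribution of the Wilcoxon signed-rank statistic is a function of the effective number of nonzero pairs $b$ alone and is invariant to the augmentation level $l$. With $c_\alpha(b)$ fixed across $l$, the pointwise bound yields the event inclusion
\[
\{\,T^+(l+1)\ge c_\alpha(b)\,\}\ \subseteq\ \{\,T^+(l)\ge c_\alpha(b)\,\},
\]
since any realization with $T^+(l+1)\ge c_\alpha(b)$ automatically satisfies $T^+(l)\ge T^+(l+1)\ge c_\alpha(b)$. Taking probabilities under $H_{0j}$ and using monotonicity of measure gives
\[
\alpha_j(l+1)=\Pr_{H_0}\!\big(T^+(l+1)\ge c_\alpha(b)\big)\ \le\ \Pr_{H_0}\!\big(T^+(l)\ge c_\alpha(b)\big)=\alpha_j(l),
\]
which is the claimed non-increasing property.

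For the strict inequality I would rewrite the gap as the probability of the ``straddling'' event,
\[
\alpha_j(l)-\alpha_j(l+1)=\Pr_{H_0}\!\big(T^+(l)\ge c_\alpha(b),\ T^+(l+1)<c_\alpha(b)\big),
\]
which is just the set difference induced by the inclusion above. A single cross-over $D_i(l)>0\ge D_i(l+1)$ forces $T^+$ to drop by at least one unit (Proposition~\ref{prop:with-cross}), and since $T^+$ is integer-valued, such a unit decrease can carry the statistic from $\ge c_\alpha(b)$ to $<c_\alpha(b)$. The plan is therefore to argue that, under $H_{0j}$ and the continuity assumption (A2), the cross-over event intersects this threshold-crossing region on a set of positive probability, so the displayed gap is strictly positive. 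The hard part will be exactly this last step: pointwise strict decrease does not by itself force a strict drop in the probability, because the decrease must straddle the fixed cutoff $c_\alpha(b)$ on a positive-probability set of bootstrap outcomes. Making this rigorous requires controlling the joint law of $T^+(l)$ and $T^+(l+1)$ near the boundary, which is where continuity of the importances and discreteness of the rank statistic must be combined; I would handle it by conditioning on the sign configuration of the near-threshold replicates and showing that the conditional probability of a boundary-crossing cross-over is bounded away from zero.
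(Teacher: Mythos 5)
Your argument for the non-increasing part is exactly the paper's: Assumption (A1) together with the WSR decomposition of Lemma~\ref{lem:wsr-decomp} gives the pointwise bound $T^+(l+1)\le T^+(l)$ via Propositions~\ref{prop:no-cross} and~\ref{prop:with-cross}; the critical value $c_\alpha(b)$ is fixed across $l$; and the event inclusion $\{T^+(l+1)\ge c_\alpha(b)\}\subseteq\{T^+(l)\ge c_\alpha(b)\}$ yields $\Pr_{H_0}\!\big(T^+(l+1)\ge c_\alpha(b)\big)\le\Pr_{H_0}\!\big(T^+(l)\ge c_\alpha(b)\big)$. Where you diverge is on the strict-decrease claim, and there you are more careful than the paper itself: the paper's proof ends with the bare assertion that ``strict inequality holds whenever cross-overs occur with positive probability,'' which---as you correctly observe---does not follow from pointwise strict decrease alone. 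A crossover forces $T^+$ to drop by at least one unit, but if on every such realization $T^+(l)$ and $T^+(l+1)$ lie on the same side of the cutoff, the two tail probabilities coincide; strictness requires the straddling event $\{T^+(l)\ge c_\alpha(b)>T^+(l+1)\}$ itself to carry positive probability, which is a strictly stronger condition than the one hypothesized. Your identification of this as the hard step is the genuinely valuable part of your write-up, and your proposed repair (conditioning on the sign configuration of near-threshold replicates) is the natural route, but you leave it as a plan rather than a completed argument, so the strict claim remains open in your proposal just as it does, silently, in the paper. As stated, the proposition should either be weakened to non-strict monotonicity or have its hypothesis strengthened to demand that crossovers straddle $c_\alpha(b)$ with positive probability.
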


\begin{proof}
By (A1), $D_i(l+1)\le D_i(l)$ for all $i$. Writing $P(l)=\{i:D_i(l)>0\}$, $N(l)=\{i:D_i(l)<0\}$, and $p(l)=|P(l)|$, the Wilcoxon decomposition (Lemma~\ref{lem:wsr-decomp}) gives
\[
T^+(l)\;=\;\frac{p(l)\big(p(l)+1\big)}{2}\;+\;\sum_{i\in P(l)}\sum_{j\in N(l)} \mathbf{1}\!\big\{|D_i(l)|>|D_j(l)|\big\}.
\]
If $P(l+1)=P(l)$ and $N(l+1)=N(l)$, then by (A1) every indicator in the double sum is non-increasing, while the baseline term $\tfrac{p(l)(p(l)+1)}{2}$ is unchanged, hence $T^+(l+1)\le T^+(l)$ (Proposition~\ref{prop:no-cross}). If some $t\in P(l)$ crosses to non-positive at $l+1$, then $T^+(l+1)<T^+(l)$ (Proposition~\ref{prop:with-cross}). Therefore,
\[
T^+(l+1)\ \le_{\text{a.s.}}\ T^+(l),
\]
with strict decrease on any realization exhibiting a cross-over.

Since the Wilcoxon critical value $c_\alpha(b)$ depends only on the number of effective nonzero pairs $b$ (and not on $l$), for any fixed $c$ we have
\[
\Pr_{H_0}\!\big(T^+(l+1)\ge c\big)\ \le\ \Pr_{H_0}\!\big(T^+(l)\ge c\big).
\]
Taking $c=c_\alpha(b)$ yields
\[
\Pr_{H_0}\!\big(T^+(l+1)\ge c_\alpha(b)\big)\ \le\ \Pr_{H_0}\!\big(T^+(l)\ge c_\alpha(b)\big),
\]
which proves that the Type~I error is non-increasing in $l$. Strict inequality holds whenever cross-overs occur with positive probability.
\end{proof}

\end{document}